\newtheorem{prop}{Proposition}
\begin{document}

\title{Topological Measurement of Deep Neural Networks Using Persistent Homology
	%\thanks{Grants or other notes
%about the article that should go on the front page should be
%placed here. General acknowledgments should be placed at the end of the article.}
}
%\subtitle{Do you have a subtitle?\\ If so, write it here}

%\titlerunning{Short form of title}        % if too long for running head

\author{Satoru Watanabe         \and
        Hayato Yamana %etc.
}

%\authorrunning{Short form of author list} % if too long for running head

\institute{S. Watanabe \at
              Graduate School of Fundamental Science and Engineering, Waseda University, Shinjuku-ku, Tokyo, Japan\\
              \email{satoru.watanabe.aw@hitachi.com}           %  \\
%             \emph{Present address:} of F. Author  %  if needed
           \and
           H. Yamana \at
              Graduate School of Fundamental Science and Engineering, Waseda University, Shinjuku-ku, Tokyo, Japan\\
              \email{yamana@waseda.jp}           
}

\date{Received: date / Accepted: date}
% The correct dates will be entered by the editor

\maketitle

\begin{abstract}
The inner representation of deep neural networks (DNNs) is indecipherable, which makes it difficult to tune DNN models, control their training process, and interpret their outputs.
In this paper, we propose a novel approach to investigate the inner representation of DNNs through topological data analysis (TDA).
Persistent homology (PH), one of the outstanding methods in TDA, was employed for investigating the complexities of trained DNNs.
We constructed clique complexes on trained DNNs and calculated the one-dimensional PH of DNNs.
The PH reveals the combinational effects of multiple neurons in DNNs at different resolutions, which is difficult to be captured without using PH.
Evaluations were conducted using fully connected networks (FCNs) and networks combining FCNs and convolutional neural networks (CNNs) trained on the MNIST and CIFAR-10 data sets.
Evaluation results demonstrate that the PH of DNNs reflects both the excess of neurons and problem difficulty, making PH one of the prominent methods for investigating the inner representation of DNNs.

\keywords{Deep neural network \and Convolutional neural network \and Persistent Homology \and Topological data analysis}
% \PACS{PACS code1 \and PACS code2 \and more}
% \subclass{MSC code1 \and MSC code2 \and more}
\end{abstract}

\begin{acknowledgements}
	We are grateful to Hitachi, Ltd.\  for the tuition subsidy. 
	The founder had no role in study design and technical investigation in this paper. 
\end{acknowledgements}

\section{Introduction}
\label{intro}
Deep neural networks (DNNs) have demonstrated a remarkable performance in various fields including image analysis, speech recognition, and text classification \cite{zhang2018survey,hatcher2018survey}.
However, the inner representations of DNNs are indecipherable, which makes it difficult to tune DNN models, control their training process, and interpret their outputs.
Many approaches enabling the understanding of the inner representation of DNNs have been investigated, including the input identification of specific results \cite{bach2015pixel,zeiler2014visualizing,samek2016evaluating,montavon2017explaining} and similarity evaluation between different networks \cite{raghu2017svcca,morcos2018insights,kornblith2019similarity}.
At the same time, the complexity of DNNs is one of the essential subjects, which represents the knowledge in trained DNNs.  

In this paper, we propose a novel approach to investigate the inner representation of DNNs using topological data analysis (TDA).
TDA employs results from geometry and topology \cite{otter2017roadmap,wasserman2018topological}, which has provided new insights in various fields such as neuroscience \cite{sizemore2018cliques,8113494,curto2017can,yoo2016topological,petri2014homological}, proteomics \cite{cang2018integration,gameiro2015topological,xia2014persistent}, and material science \cite{hiraoka2016hierarchical,kramar2013persistence}.

Persistent homology (PH) is one of the prominent methods in TDA owing to its three advantages: theoretical foundation, computability in practice, and robustness with small perturbations \cite{otter2017roadmap}.
These advantages are beneficial for investigating DNNs.
Theoretical foundation and computability are fundamental in constructing knowledge from empirical observations, while robustness is indispensable for investigating DNNs involving parameter perturbations.

Bastian et al.\ investigated the complexity of the inner representation of DNNs using zero-dimensional PH, which counts the number of connected neurons at different resolutions \cite{rieck2018neural}.
At the same time, one-dimensional PH can reveal other essential aspects of the knowledge complexity in DNNs because it can examine the combinational effects of multiple neurons.
To the best of our knowledge, there is no previous work employing one-dimensional PH for investigating the inner representation of DNNs based on the trained weight parameters except our presentation at a symposium \cite{watanabetopological}. 

We constructed clique complexes, which were employed for analyzing brain networks \cite{reimann2017cliques}, on trained DNNs.
Furthermore, we calculated the one-dimensional PH of fully connected networks (FCNs) and networks combining FCNs and convolutional neural networks (CNNs) trained on the MNIST and CIFAR-10 data set to demonstrate the effectiveness of one-dimensional PH\footnote{The source code used in the evaluation can be accessed at \url{https://github.com/satoru-watanabe-aw/DNNtopology}.}.

The remainder of this paper is organized as follows.
Section \ref{sec:intuition} presents the intuition behind this study.
Background information is presented in Section \ref{sec:background}.
Clique complexes are constructed on trained DNNs in Section \ref{introduceTopology}.
The evaluation setup and results are provided in Section \ref{evaluationSetup} and \ref{result}, respectively. 
Section \ref{discusstion} discusses the assumptions and applications of the measurement method.
Related work is discussed in Section \ref{related}.
Conclusions and suggestions for future work are presented in Section \ref{conclusion}.
 
\section{Intuition behind topological measurement of DNNs}
\label{sec:intuition}
DNNs work as knowledge distilling pipelines, meaning that the degree of feature abstraction increases with the depth of DNN layers \cite{lecun2015deep}.
For example, images of cats are incrementally abstracted from pixels to diagonal lines and ear shapes. Additionally, DNNs can detect cats based on feature combinations \cite{Chollet:2017:DLP:3203489}.
Feature relationships represent the implementation of knowledge in DNNs, which can be investigated from DNN structures.

Previous studies have demonstrated that PH can be used for comparing and characterizing human brains.
Cassidy et al.\ employed PH as a tool for comparing human brains using functional magnetic resonance imaging (fMRI) \cite{8113494}. 
Petri et al.\ demonstrated that psilocybin affects the homological structure of the brain's functional patterns \cite{petri2014homological}. Furthermore,
Sizemore et al.\ employed PH to highlight the crucial features of human brains from diffusion spectrum imaging (DSI) \cite{sizemore2018cliques}.
However, it is often difficult to quantify the activation of neurons from fMRIs and DSIs.
Hence, PH is more useful for analyzing DNNs because their network structures and the activation of neurons can be described mathematically. In this study, we employed PH to investigate the process of training a DNN and evaluate its knowledge representation complexities. 

\section{Background}
\label{sec:background}
The terms of TDA and PH can be understood based on previous studies \cite{edelsbrunner2010computational,horak2009persistent,otter2017roadmap}, while
introductory videos explaining TDA and PH can be found on on-demand video services\footnote{\url{https://www.youtube.com/watch?v=akgU8nRNIp0}, \url{https://www.youtube.com/watch?v=2PSqWBIrn90}}.

\subsection{Persistent homology}
The homology groups of orders zero and one represent the number of connected components and holes, respectively.
PH is a method for computing the homology groups at different resolutions. 
While the formal definition of PH is provided below, 
its intuitive understanding is sufficient for interpreting the presented experimental results obtained using some computational libraries.

\begin{definition} 
	An abstract simplicial complex is a finite collection of sets $\mathcal{K}$ 
	such that $X \in \mathcal{K}$ and $Y\subseteq X$ implies $Y\in \mathcal{K}$.
\end{definition}    

The sets $X$ in $\mathcal{K}$ denote its simplices. 
The dimension of a simplex is $\dim X = {\rm card}\ X - 1$, where ${\rm card}\ X $ denotes the cardinality of $X$.
The dimension of an abstract simplicial complex is the maximum dimension of any of its simplices.
The vertex set is the set consisting of all the simplices of dimension $0$, while the face of a simplex $X$ is a non-empty subset $Y \subseteq X$.

A $p$-chain $c$ of a simplicial complex $\mathcal{K}$ is a formal sum of $p$-simplices in $\mathcal{K}$, that is, $c = \sum a_{i}X_{i}$, where $X_{i}$ are $p$-simplices and $a_{i}$ are coefficients.
We employ module-2 coefficients, that is, $a_{i}$ are either 0 or 1 and $1+1=0$.
The binary arithmetic of two $p$-chains $c = \sum a_{i}X_{i}$ and $c' = \sum b_{i}X_{i}$ is defined as $c + c' = \sum (a_{i} + b_{i}) X_{i}$, where the coefficients are of modulo-2.
The $p$-chain forms a group denoted as $C_{p}$.

A boundary operator $\partial_{p}$ is a map from a $p$-simplex to the sum of its $(p-1)$-simplices. 
Formally, $\partial_{p}X = \sum_{j=0}^{p} [ v_{0},  \ldots,\hat{v_{j}}, \ldots, v_{p}  ]$, where $[ v_{0}, \ldots, v_{p}  ]$ is the simplex with vertices, while the hat indicates that $v_{j}$ is removed.
A chain complex is the sequence of chain groups connected by boundary operators, $\cdots \xrightarrow {\partial_{p+2}}C_{p+1} \xrightarrow{\partial_{p+1}} C_{p} \xrightarrow{\partial_{p}} C_{p-1} \xrightarrow{\partial_{p-1}} \cdots$.
A $p$-cycle is a $p$-chain with an empty boundary forming a group denoted as $Z_{p} = \ker \partial_{p}$.
A $p$-boundary is a $p$-chain, that is, the image of a $(p+1)$-chain forming a group denoted as $B_{p} = {\rm im}\  \partial_{p+1}$.

\begin{definition}
	The $p$-th homology group denoted as $H_{p} (= Z_{p}/B_{p})$ is the $p$-th cycle group modulo the $p$-th boundary group.
	The $p$-th Betti number $\beta_{p}$ is the rank of $H_{p}$.
\end{definition} 

\begin{definition}
	A filtration of the simplicial complex $\mathcal{K}$ is a sequence of simplicial complex such that $\emptyset = K_{0}\subset K_{1} \subset \cdots\subset K_{n} = \mathcal{K}$.
\end{definition} 

For every $i \le j$, there is an induced homomorphism in each dimension $p$, $f_{p}^{i,j} $ from $H_{p}(K_{i})$ to $H_{p}(K_{j})$.
$f_{p}^{i,j}$ satisfies the condition of $f_{p}^{k,j}\circ f_{p}^{i,k} = f_{p}^{i,j}$ for all $0 \le i \le k \le j \le n$.

\begin{definition}
	Let $\emptyset = K_{0}\subset K_{1} \subset \cdots\subset K_{n} = \mathcal{K}$ be a filtration.
	The $p$-th PH of $\mathcal{K}$ is the pair $( \{H_{p}(K_{i})\}_{0\le i \le n},\{f_{p}^{i,j}\}_{0\le i \le j \le n})$, 
	where the homomorphism $f_{p}^{i,j} : H_{p}(K_{i} ) \rightarrow H_{p}(K_{j})$ represents the maps induced by including maps $K_{i} \rightarrow K_{j}$.
\end{definition} 

A homology $\gamma \in H_{p}(K_{i}) $ can be said to be born at $K_{i}$ if $\gamma \notin {\rm im} f_{p}^{i-1, i}$.
Furthermore, if $\gamma$ is born at $K_{i}$, then it dies entering $K_{j}$ if $f_{p}^{i,j-1}(\gamma) \notin {\rm im} f_{p}^{i-1,j-1}$ but $f_{p}^{i,j}(\gamma) \in {\rm im}f_{p}^{i-1,j}$.
The lifetime of $\gamma$ is represented by the half-open interval $[i,j)$. 
If $f_{p}^{i,j}(\gamma) \neq 0 \ (i \le \forall j \le n)$, $\gamma$ can be said to live forever, and its lifetime is the interval $[i,\infty)$.

\begin{figure*}[t]
	\centering
	\includegraphics[width=\hsize]{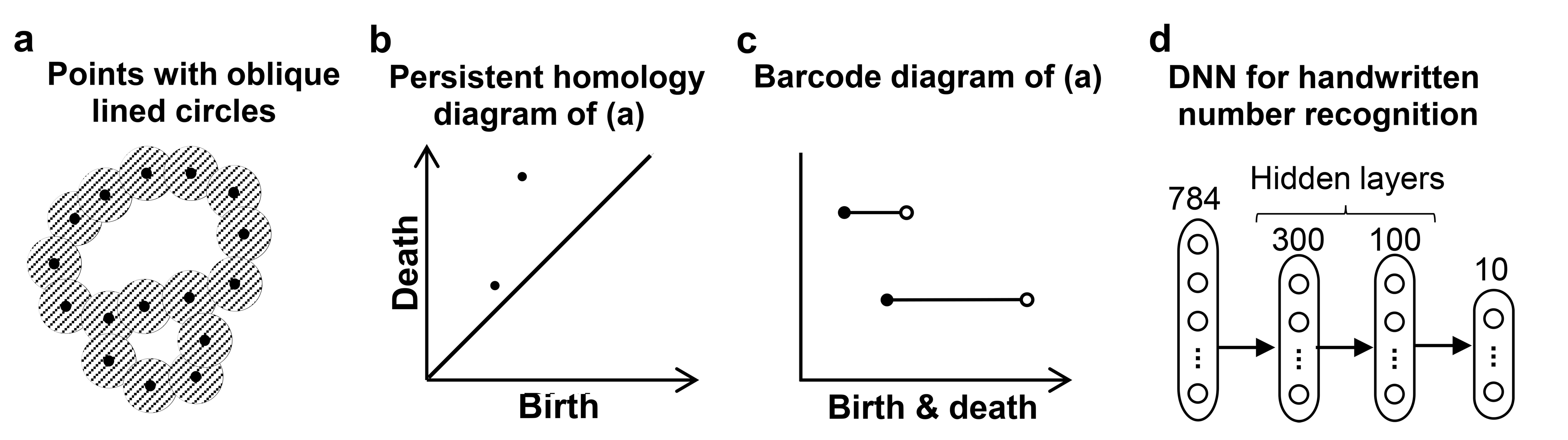}
	\caption{(a) Examples of persistent homology diagrams; (b) persistent homology diagram of (a); (c) barcode diagram of (a); (d) DNN for handwritten number recognition.}
	\label{fig:figure1}
\end{figure*}

\subsection{Diagrams}
A PH diagram illustrates the birth and death of homologies in a filtration, which was fundamentally introduced in \cite{barannikov1994framed}.
Fig. \ref{fig:figure1}(a) shows points with oblique 
lined circles in $\mathbb{R}^{2}$.
When the radius of the circles is small, the points are isolated.
Two encircled regions appear in $\mathbb{R}^{2}$ when the circles are gradually enlarged.
The appearance of the encircled regions corresponds to the birth of homologies.
The regions disappear when the circles are enlarged further, and the disappearances correspond to the death of homologies. 

Fig. \ref{fig:figure1}(b) shows the PH diagram of Fig. \ref{fig:figure1}(a), in which the X-axis shows the birth of homologies and the Y-axis the death of them.  
The two points in Fig. \ref{fig:figure1}(b) correspond to the births and deaths of the two regions. 
The large region in Fig. \ref{fig:figure1}(a) is stable with regard to the enlargement of the circles.
In contrast, the small region is less stable compared to the large region.
The stability of the regions is indicated by the distance from the dialog line in Fig. \ref{fig:figure1}(b), i.e., the small region is pointed near the dialog line, whereas the large region is pointed in a distance from the dialog line.

Barcode is another diagram that gives the same information as the PH diagram. 
Barcode diagram of Fig. \ref{fig:figure1}(a) is shown in Fig. \ref{fig:figure1}(c), in which the start and end points of lines parallel to the X-axis show the birth and death of homologies, respectively.
The short and long lines correspond to the small and large regions, respectively. 
The stability of regions is indicated by the length of the bars in the barcode diagrams.

\section{Construction of clique complexes on DNNs}\label{introduceTopology}
We consider a set of neurons as vertices $V = \{v_{0}, \ldots, v_{n} \}$, where $n+1$ is the number of neurons. 
DNNs are considered as directed graphs with weights $w_{ij}$, where $w_{ij}$ denotes the weight between $v_{i}$ and $v_{j}$; here, $w_{ij}$ is zero if $v_{i}$ and $v_{j}$ are not connected.
We set the value of the relevance of identical neurons to one and the relevance $R_{ij}$ between the connected neurons $v_{i}$ and $v_{j}$ as the normalized weight.
Formally we set 
\begin{eqnarray}\label{L}
R_{ij} = \left \{ \begin{array}{ll}
1 & ( i = j ) \\
w_{ij}^{+}/\sum_{i,i\neq j} w_{ij}^{+} &(i \neq j),
\end{array}  \right.
\end{eqnarray}
where $w_{ij}^{+}$ denotes the positive part of the weight, i.e.\ $w_{ij}^{+} = \max \{ 0, w_{ij} \}$.
$R_{ij}$ indicates the relevance between $v_{i}$ and $v_{j}$ because the input to the j-th neuron is calculated by $\sum_{i} a_{i}w_{ij} + b_{j}$ in DNNs, where $a_{i}$ is the activation of the i-th neuron and $b_{j}$ is the bias \cite{Chollet:2017:DLP:3203489}.
We employed the positive part of the weight and ignored the bias, in a manner similar to the $z^{+}$-rule defined in deep Taylor decomposition \cite{montavon2017explaining}.

To construct clique complexes on DNNs, 
the relevance was extended to indirectly connected neurons.
For example, when $v_{0}$ and $v_{2}$ are connected to a path $v_{0} \rightarrow v_{1} \rightarrow v_{2}$, the relevance between $v_{0}$ and $v_{2}$ corresponding to the path is defined as $R_{01}R_{12}$.
The intuition behind the definition is as follows:
$R_{01}$ and $R_{12}$ indicate the contributions of $v_{0}$ and $v_{1}$ to the increase in the inputs of $v_{1}$ and $v_{2}$, respectively; $R_{01}R_{12}$ indicates the contribution of $v_{0}$ to the increase in the input of $v_{2}$. 
Formally we set 
\begin{equation}\label{R}
\widetilde{R_{ij}} = \max_{ {(v_{i}, v_{m_{1}}\ldots,v_{m_{k}}, v_{j})}\in L_{ij}} R_{v_{i}v_{m_{1}}}\cdots R_{v_{m_{k}}v_{j}},
\end{equation}
where $L_{ij}$ denotes the set of all possible paths from $v_{i}$ to $v_{j}$.
It is possible to define $\widetilde{R_{ij}}$ using multiple paths in $L_{ij}$. However, the maximum was employed in Eq. (\ref{R}) to  improve computational efficiency.

Masulli et al.\ constructed a clique complex $K(G)$ on a finite directed weighted graph $G = (V, E)$ with vertex set $V$ and edge set $E$ with no self-loops and no double edges \cite{masulli2016topology}.
They defined the clique complex $K(G)$ as $K(G)_{0} = V $ and $K(G)_{p} = \{ (v_{K_{0}},\ldots, v_{K_{p}}) \ ;\ v_{K_{i}}\in V, (v_{K_{i}},v_{K_{j}}) \in E \mbox{ for all\ } K_{i} < K_{j} \} \ (\mbox{for \ } p \geq 1)$, where $K(G)_{p}$ denotes the set of $p$-simplices on $G$.

Correspondingly, $\widetilde{R_{ij}}$ enables the construction of a clique complex and filtration on $V$.
The neurons were numbered in ascending order from the output to input layers. 
Hence, the numbers of neurons in the closer layer to the output layer are smaller than those in the farther layer, where the distance is indicated by the number of edges from the output layer. 
Using this numbering, we set $p$-simpleces on $V$ as 
\begin{eqnarray}\label{simpleces}
K_{p}^{t} = \left \{ \begin{array}{ll}
V & ( p = 0 ) \\
\{ (v_{k_{0}},\ldots, v_{k_{p}})\ ;\ v_{k_{i}}\in V, \widetilde{R_{k_{i}k_{j}}} \geq t\ \mbox{ for all\ } k_{i} > k_{j} \}  &(p \geq 1),
\end{array}  \right.
\end{eqnarray}
where $t$ is a threshold value ($0\leq t\leq 1$).

\begin{prop}
	Let $V = (v_{0}, \ldots,v_{n})$ be a finite set, and $\{w_{ij}\} \ (0\leq i,j \leq n)$ be a set of real numbers.
	Let $\widetilde{R_{ij}} \ (0\leq i, j \leq n)$ be the relevance defined by Eqs. (\ref{L}) and (\ref{R}) using $\{w_{ij}\}$. Let
	$K_{p}^{t}$ be the $p$-simplices defined by Eq. (\ref{simpleces}), where $t$ is a threshold value ($0\leq t \leq 1$).
	Then, a finite collection of sets $K^{t} = K_{0}^{t}\cup K_{1}^{t}\cup \cdots \cup K_{n}^{t}$ is an abstract simplicial complex.
\end{prop}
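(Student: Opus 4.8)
The plan is to verify directly the two requirements of Definition 1: that $K^t$ is a finite collection of sets and that it is closed under passage to (non-empty) subsets. Finiteness is immediate, since every element of $K^t$ is a subset of the finite vertex set $V$, so $K^t \subseteq 2^{V}$ is a finite collection. The substance of the argument is therefore the downward-closure property, and the key observation I would isolate first is that membership in $K_p^t$ is governed entirely by a \emph{pairwise} (clique) condition: by Eq. (\ref{simpleces}), a collection of $p+1$ vertices is a $p$-simplex precisely when $\widetilde{R_{k_i k_j}} \geq t$ holds for every pair of its vertices with $k_i > k_j$. Since every pair of vertices of a subset is again a pair of vertices of the original simplex, this condition is manifestly inherited by subsets, and this is exactly what drives the proof.

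Concretely, I would take an arbitrary $X \in K^t$ together with an arbitrary non-empty $Y \subseteq X$, and show $Y \in K^t$. Because the sets in $K_p^t$ have cardinality exactly $p+1$, the dimension determines which layer a simplex lives in, so $X \in K_p^t$ for $p = \mathrm{card}\,X - 1$; write $q = \mathrm{card}\,Y - 1 \leq p$. I would then split into two cases. If $q = 0$, then $Y$ is a single vertex of $V$, hence $Y \in K_0^t = V \subseteq K^t$. If $q \geq 1$, take any two distinct vertices $v_a, v_b \in Y$ with $a > b$; these also lie in $X$, so the defining inequality $\widetilde{R_{ab}} \geq t$ holds because $X \in K_p^t$. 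As this holds for every such pair in $Y$, the set $Y$ satisfies the condition in Eq. (\ref{simpleces}) defining $K_q^t$, whence $Y \in K_q^t \subseteq K^t$. In either case $Y \in K^t$, which establishes downward closure and completes the argument.

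I do not expect a genuine obstacle here; the statement is a routine consequence of the clique-complex construction, and indeed this is why the clique condition is a natural choice for building a filtration. The only points requiring mild care are bookkeeping ones: first, recognizing explicitly that the $p \geq 1$ clause of Eq. (\ref{simpleces}) is purely pairwise, so that it survives restriction to subsets regardless of the asymmetry of $\widetilde{R}$; and second, handling the low-dimensional boundary case, in particular checking that singletons are supplied by the separately specified layer $K_0^t = V$, so that a subset which shrinks to a single vertex still lands in $K^t$. Following the convention implicit in the paper's notion of a face as a non-empty subset, I restrict attention to non-empty $Y$ throughout.
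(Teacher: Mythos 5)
Your proposal is correct and takes essentially the same route as the paper's own proof: both establish downward closure by observing that the defining condition of Eq.~(\ref{simpleces}) is purely pairwise, so every pair in a subset $Y \subseteq X$ is already a pair in $X$ and inherits $\widetilde{R_{ij}} \geq t$. The only difference is that you explicitly spell out the bookkeeping (finiteness, and the singleton case landing in $K_{0}^{t} = V$) that the paper's terser proof leaves implicit.
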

\begin{proof}
	Let $X = \{ v_{X_{0}}, \ldots, v_{X_{p}}\}$ be an element of $K^{t}$. Then, $\widetilde{ R_{X_{i}X_{j}}}$ is greater than or equal to $t\ \mbox{for all}\ X_{i} > X_{j}$.
	Let $Y = \{ v_{Y_{0}},\ldots, v_{Y_{q}}\}$ be a subset of $X$.
	Then, $\widetilde{R_{Y_{i}Y_{j}}} $ are greater than or equal to $t\ \mbox{for all}\ Y_{i} > Y_{j}$.
	Therefore, $X \in K^{t}$ and $Y\subseteq X$ imply $Y\in K^{t}$. 
	\qed
\end{proof}

\begin{prop}
	Let $(t_{i})_{i=1}^{n}$ be a monotonically decreasing sequence ranging from $1$ to $0$.
	Then, $K_{0} = \emptyset $ and $K_{i} = K^{t_{i}}\ (1 \leq i \leq n)$ form a filtration of $K^{t_{n}}$. 
\end{prop}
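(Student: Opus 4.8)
The plan is to verify the three requirements in the definition of a filtration directly: that each $K^{t_i}$ is an abstract simplicial complex, that the members are nested as $\emptyset = K_{0}\subseteq K_{1}\subseteq\cdots\subseteq K_{n}$, and that the largest member equals the stated ambient complex $K^{t_n}$. The first requirement is already settled by the preceding proposition, which shows that $K^{t}$ is an abstract simplicial complex for every threshold $t$ with $0\le t\le 1$; applying it at each $t_i$ disposes of every term of the sequence at once.

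The heart of the argument is a monotonicity observation: lowering the threshold can only enlarge the complex. First I would fix two thresholds $s\le t$ and show $K^{t}\subseteq K^{s}$. For a $0$-simplex this is immediate, since $K_{0}^{t} = V = K_{0}^{s}$ by Eq.~(\ref{simpleces}). For $p\ge 1$, take any simplex $X = (v_{k_{0}},\ldots,v_{k_{p}}) \in K_{p}^{t}$; its defining condition is $\widetilde{R_{k_{i}k_{j}}}\ge t$ for all $k_{i} > k_{j}$, and since $s\le t$ the same pairs satisfy $\widetilde{R_{k_{i}k_{j}}}\ge s$, so $X\in K_{p}^{s}$. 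Hence $K^{t}\subseteq K^{s}$ whenever $s\le t$.

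With monotonicity in hand, the nesting follows by applying it along the sequence. Because $(t_{i})$ is monotonically decreasing we have $t_{1}\ge t_{2}\ge\cdots\ge t_{n}$, and the observation gives $K^{t_{1}}\subseteq K^{t_{2}}\subseteq\cdots\subseteq K^{t_{n}}$, that is, $K_{1}\subseteq\cdots\subseteq K_{n}$. For the bottom of the chain, $K_{0}=\emptyset$ is contained in $K_{1}$ trivially; moreover $K_{1}=K^{t_{1}}$ is nonempty because it always contains the vertex set $V$, so this inclusion is proper. The top of the chain is $K_{n}=K^{t_{n}}$ by definition, which is exactly the ambient complex named in the statement. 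Together these establish that $K_{0},\ldots,K_{n}$ form a filtration of $K^{t_{n}}$.

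I expect no deep obstacle here; the only points needing care are getting the direction of the inclusion right---a smaller threshold admits more simplices and therefore yields a larger complex---and reading the symbol $\subset$ in the definition of a filtration as $\subseteq$, since consecutive thresholds need not produce distinct complexes and strict inclusion cannot be guaranteed at every step.
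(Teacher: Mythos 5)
Your proposal is correct and follows essentially the same route as the paper: the key step in both is the monotonicity observation that $K_{p}^{t}\subseteq K_{p}^{s}$ whenever $s\le t$, read off directly from Eq.~(\ref{simpleces}), from which the nested chain follows along the decreasing sequence $(t_{i})$. You simply make explicit what the paper leaves implicit---invoking the preceding proposition for the simplicial-complex property of each $K^{t_i}$, and correctly noting that the $\subset$ in the filtration definition must be read as $\subseteq$ since consecutive thresholds need not yield distinct complexes.
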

\begin{proof}
	$K_{p}^{t_{k}}$ is included in $K_{p}^{t_{l}}$ $(1 \geq t_{k} > t_{l} \geq 0)$ from Eq. (\ref{simpleces}). 
	It implies $\emptyset = K_{0} \subset K_{1} \subset \cdots \subset K_{n} = K^{t_{n}}$.
	\qed
\end{proof}

\begin{figure*}[t]
	\centering
	\includegraphics[width=\linewidth]{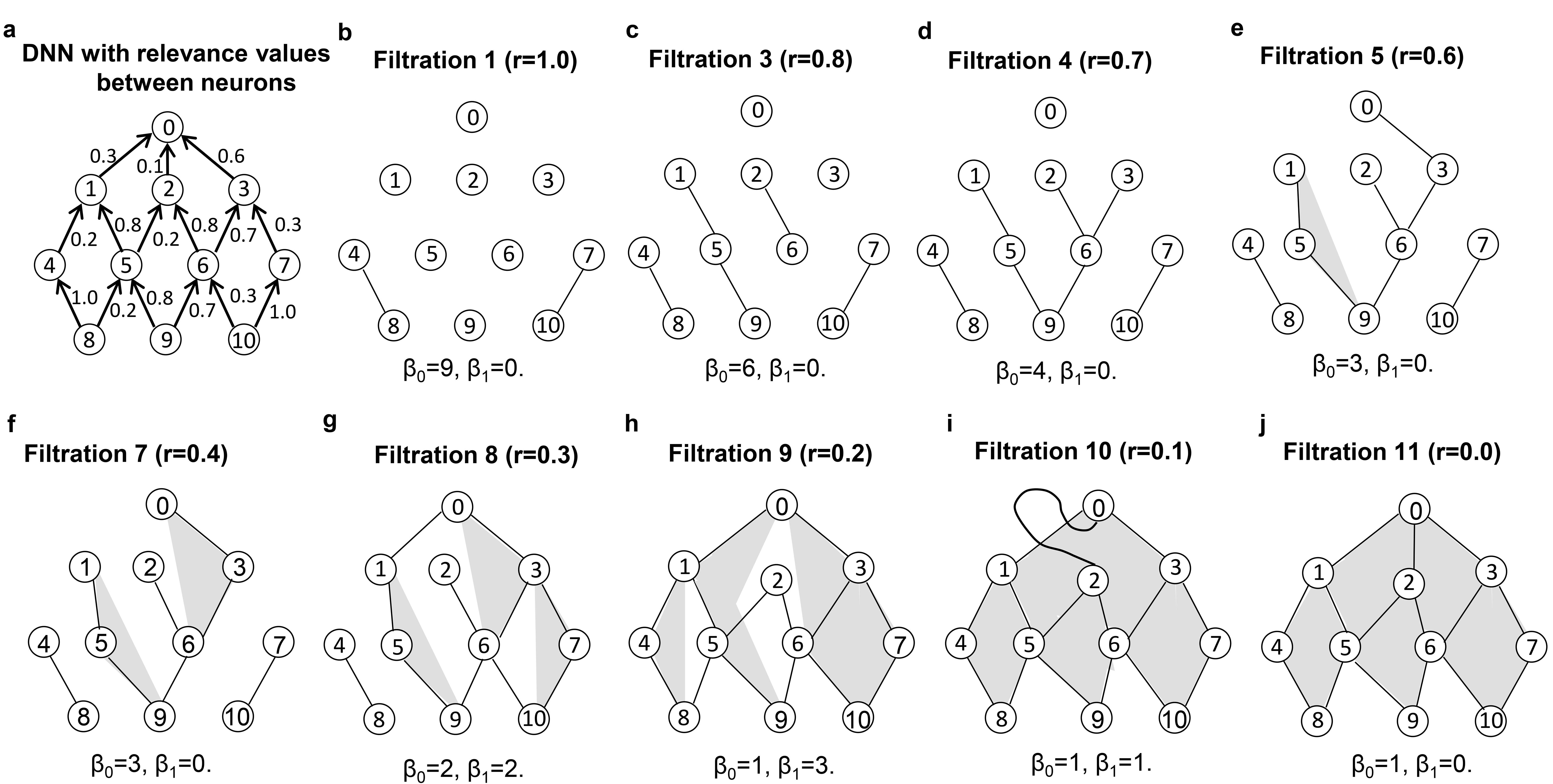}
	\caption{(a) Example of DNN with weights; (b--h) simplicial complexes and betti numbers corresponding to the filtration.}
	\label{fig:figure2}
\end{figure*}

\begin{figure*}[t]\label{relevance}
	\centering	\includegraphics[width=\linewidth]{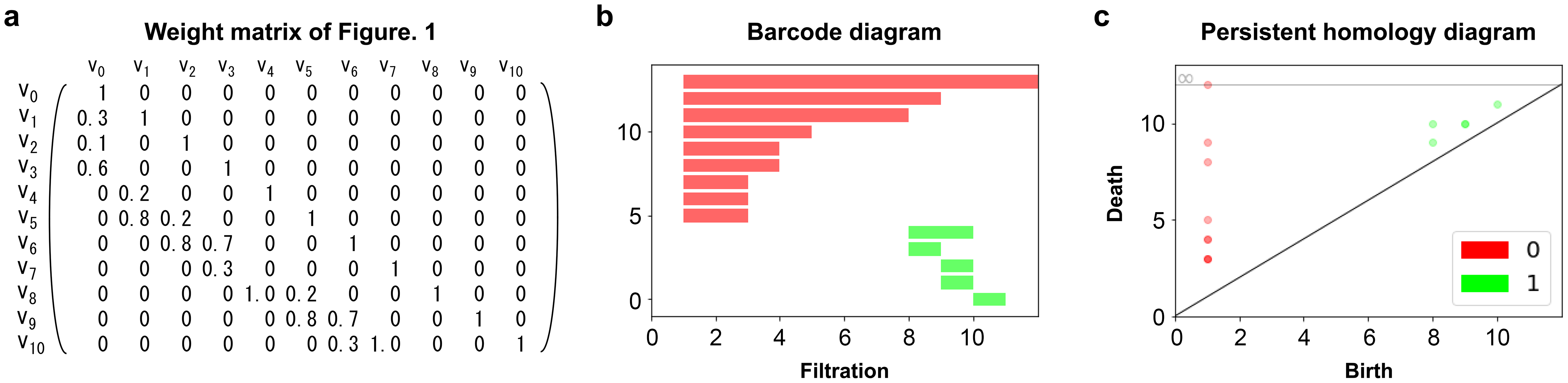}
	\caption{(a) Weight matrix of Fig. 2(a); (b,c) barcode and PH diagrams illustrated using GUDHI library.}
	\label{fig:figure3}
\end{figure*}

Fig. \ref{fig:figure2}(a) illustrates a four-layered DNN with an output neuron $v_{0}$.
The values adjacent to the arrows denote the weight between two neurons, and the weight matrix is presented in Fig. \ref{fig:figure3}(a) where the (i,j) element denotes the weight between the i-th and j-th neurons.
Fig. \ref{fig:figure2}(b) illustrates the simplicial complex of $K_{r=1.0}$ with Betti number $\beta_{0} = 9$.
The decrease of the Betti number $\beta_{0}$ according to the filtration can be observed in Fig. \ref{fig:figure2}(c) to (h). 
Fig. \ref{fig:figure2}(e) illustrates a 2-simplex represented with the gray triangle.

Fig. \ref{fig:figure2}(g) and \ref{fig:figure2}(h) illustrate the increase of the Betti number $\beta_{1}$ corresponding to the occurrences of the cycle.
If the vertices representing the features of input images are connected straightforwardly to the output neurons, the knowledge in the DNN is considered to be simple because it is equivalent to feature detection. 
In contrast, the increase of the Betti number $\beta_{1}$ indicates that the DNN classifies the input based on the combination of features.
From these viewpoints, we can assume the increase in the Betti number $\beta_{1}$ reflects the complexity of knowledge in the DNN.
Filtration 10 (Fig. \ref{fig:figure2}(i)) has Betti number $\beta_{1} = 1$.
While [0, 2] is a simplex in Filtration 10, it is not included in another simplex [0,$\ldots$,10] and produces $\beta_{1} = 1$.

The computation of PH involves the explosion of the complexity caused by the increase of vertices, several implementations of which are publicly available \cite{otter2017roadmap}. 
We employed the GUDHI \cite{gudhi:cython,gudhi:urm,boissonnat2014simplex}, JavaPlex \cite{Javaplex}, and Dionysus 2 \cite{edelsbrunner2012persistent,edelsbrunner2000topological} libraries for the computation and visualization.  
These libraries require registering simplexes in each filtration to calculate PH.

Algorithm \ref{algorithm} identifies all simplexes from a vertex $s$ up to the limit of the threshold of relevance $t$ using the recursive procedure call. 
All simplexes in each filtration are identified using this procedure and registered to the libraries.
Fig. \ref{fig:figure3}(b) and (c) are barcode and PH diagrams illustrated by the GUDHI library, respectively.
The library employed red and green for indicating zero- and one-dimensional homologies, respectively.
The Betti numbers in Fig. \ref{fig:figure3}(b) correspond to the number of the intersections between the bars and the perpendicular lines to the X-axis (remembering that the lifetime of homologies is defined by the half-open interval $[birth, death)$).
The GUDHI library illustrates Betti numbers using color shades in PH diagrams shown in Fig. \ref{fig:figure3}(c).
PH was calculated using the Dionysus 2 and JavaPlex libraries, resulting in the same diagrams. 

\begin{algorithm*}[t]
	\captionof{algorithm}{Algorithm for obtaining simplexes from a vertex $s$ using a threshold $t$}
	\label{algorithm}
	\begin{algorithmic}
		\Procedure{getSimplex}{$M$, $s$, $t$} \Comment{ where $M$: $n\times n$-matrix, $s$: array, $t$: threshold}
		\State{ $relevance \gets 1.0$, $result \gets \emptyset$, $origin \gets s[0]$}
		\For {$dest = s[0]$ to $s[|s|-1]$} \Comment{calculate the relevance from $s[0]$ to $s[|s|-1]$.}
		\State{$relevance \gets relevance \times M[origin][dest]$}\Comment{$s[|s|-1]$ is the last element of s.}
		\State{$origin \gets dest$}
		\EndFor
		\If {$relevance \geq t$}
		\State{$result.append(combination(s))$}\Comment{append all the combinations of the elements in s.}	
		\State{$lastPoint\gets s[|s|-1]$} 
		\For{ $i = 0$ to $n-1$}\Comment{check if the last point has connections.}
		\If{ $M[lastPoint][i] > 0$ and $i \neq lastPoint$}
		\State{$ss \gets $deep copy of $s$}
		\State{$recResult\gets getSimplex(M, ss.append(i), t)$}
		\Comment{recursive call with extended array.}
		\For{ $e$ in $recResult$}
		\State{$result.append(combination(e))$} \Comment{append all the combinations of the elements in e.}
		\EndFor
		\EndIf
		\EndFor
		\EndIf
		\Return{$unique(result)$}
		\Comment{return deduplicated array}
		\EndProcedure
	\end{algorithmic}
\end{algorithm*}

A filtration is defined using thresholds of relevance.
This study considered 64 threshold values composed with $(1.0^{0},\ldots ,1.0^{-7})$ and eight interval values between the adjacent values.
Formally, we considered the simplicial complexes $K_{n(r=(1-0.1\times(l-1))\times 10^{-m})} (1\le n\le 64)$, where $m$ and $l$ are the quotient and remainder when $n$ is divided by $9$, respectively.
And the filtration was defined as  $K_{1 (r=1.0)}\subset K_{2 (r=0.9)}\subset \cdots \subset K_{10 (r=1.0^{-1})}\subset  K_{11 (r=0.09)}\subset\cdots\subset K_{64 (r=1.0^{-7})}$.
While the thresholds should be considered depending on the network structure of DNNs, we set this aside as a task for future work; this study only examined the prominence of the topological measurement of DNNs.

\section{Evaluation setup}\label{evaluationSetup}
The MNIST and CIFAR-10 data sets were employed in the evaluation \cite{lecun1998gradient,krizhevsky2009learning}.
As shown in Table \ref{dataset}, the contents of the MNIST and CIFAR-10 data sets are $28\times 28$ grayscale handwritten digits and $32 \times 32$ color photographs, respectively. 
The CIFAR-10 data set comprises the photographs of 10 types of objects such as airplanes, automobiles, birds, etc.
All experiments were conducted using Keras and Tensorflow \cite{Chollet:2017:DLP:3203489,abadi2016tensorflow}, and DNNs were developed based on the examples in Keras 2.3.0.

For the classification of the MNIST data set, we employed an FCN with two hidden layers of sizes 300 and 100, the ReLU activation function in the hidden layers and 10 output neurons with the sigmoid activation function (Fig. \ref{fig:figure1}(d)).
The models were traind for 10 epochs with a batch size of 64, and all models achieved an accuracy of over 97\% on the test data.

\begin{table}[t]
	\caption{Overview of the data sets and network types employed in this study}
	\label{dataset}
	\begin{center}
		%		\begin{small}
		%			\begin{sc}
		\begin{tabular}{lllll}
			\hline\noalign{\smallskip}
			Data set        & Content & Data size & Network type \\
			\noalign{\smallskip}\hline\noalign{\smallskip}
			MNIST           & handwritten digits &784 ($28\times 28$ grayscale) &  FCN\\
			CIFER-10        & photographs & 3072 ($32\times 32$ color) & CNN, FCN \\
			\noalign{\smallskip}\hline
		\end{tabular}
		%			\end{sc}
		%		\end{small}
	\end{center}
\end{table}

For the classification of the CIFAR-10 data set, we employed DNNs consisting of a CNN and an FCN.
The CNN was used to extract features from the photographs, while the FCN was used to classify the photographs based on the combination of the features.
The proposed method was applied to the FCN since the purpose of this study was to examine the complexity of the knowledge in DNNs represented in the combination of features. 

We employed the CNN from an example network included in Keras 2.3.0 without modifications.
This CNN comprises multiple layers, including two-dimensional convolution, max pooling, and dropout layers.
Two FCNs with sizes of (300, 100, 10) and (512, 512, 10) were used for examining the sensitivity of the proposed method to the network structures\footnote{The following network structures are employed: input(3072)--Conv2D(32 filters, $3\times3$ kernel, ReLu activation)--Conv2D(32 filters, $3\times3$ kernel, ReLu activation)--MaxPooling2D($2\times$2 pool)--Dropout(dropout  ratio 0.25)--Conv2D(64 filters, $3\times3$ kernel, ReLu activation)--Conv2D(64 filters, $3\times3$ kernel, ReLu activation)--MaxPooling2D($2\times2$ pool)--Dropout(dropout ratio 0.25)--Flatten--Dense(300 or 512, ReLu activation)--Dropout(dropout ratio 0.5)--Dense(100 or 512, ReLu activation)--Dense(10, softmax activation).}.
The DNNs were trained for 30 epochs with a batch size of 32. 

\section{Evaluation results}\label{result}
\subsection{MNIST data set}\label{MNISTResult}
Figs. \ref{fig:MNIST300100}(a--j) illustrate PH diagrams of the FCNs produced using the Dionysus 2 library, where the number of input digits used to train the FCN models was varied. 
In particular, we extracted the images of the target digits from the MNIST data set and trained FCN models using the images of digits 0--9 (Fig. \ref{fig:MNIST300100}(a)), digits 0--8 (Fig. \ref{fig:MNIST300100}(b)), and so on.
The Dionysus 2 library allows to visualize the overlapping quantity of homologies using different colors as indicated by the legends in Fig. \ref{fig:MNIST300100}.
The values of birth and death in the axes on PH diagrams indicate the order of the 64 threshold values defined in Section \ref{introduceTopology}.
Let $m$ and $l$ are the quotient and remainder when the values of birth and death are divided by $9$, respectively, the threshold values corresponding to the values in the axes on PH diagrams are $(1-0.1\times(l-1))\times 10^{-m}$.
This correspondence is consistent through the paper.

The following three observations can be made from Figs. \ref{fig:MNIST300100}(a--j): (1) points are plotted in the belt-like area ($birth + 5 < death < birth + 20$) parallel to the dialog line; (2) some figures have points below the belt-like area; and (3) some figures have points over the belt-like area.

With respect to observation (2), the number of points below the belt-like area increases from Fig. \ref{fig:MNIST300100}(a) to Fig. \ref{fig:MNIST300100}(g) and decreases from Fig. \ref{fig:MNIST300100}(h) to Fig. \ref{fig:MNIST300100}(j).
This pattern reflects both the excess of the output neurons and problem difficulty. It can be further observed that the diagrams seem to reflect the degree of confidence of the FCN models, i.e., the excess of the output neurons reduced the confidence, whereas the simplicity of the problem increases it.
For further investigation, we classified five digits using five output neurons (Fig. \ref{fig:MNIST300100}(k)) and 10 digits using 20 output neurons (Fig. \ref{fig:MNIST300100}(l)).
In contrast to Fig. \ref{fig:MNIST300100}(f), the points below the belt-like area disappeared in Fig. \ref{fig:MNIST300100}(k). 
The opposite can be observed in Figs. \ref{fig:MNIST300100}(a) and \ref{fig:MNIST300100}(l).

Table \ref{NumberofPoint} lists the number of points plotted in Fig. \ref{fig:MNIST300100}(a--e), \ref{fig:MNIST300100}(i), and \ref{fig:MNIST300100}(j).
We categorized the points using the representative cycles calculated by the JavaPlex based on the following two conditions: (c1) the homology includes unused output neurons and (c2) the points are under the belt-like area ($death \le birth + 5$).
While the number of points that include unused output neurons in Figs. \ref{fig:MNIST300100}(i) and \ref{fig:MNIST300100}(j) is more than twice of that in Fig. \ref{fig:MNIST300100}(e), these points are not plotted below the belt-like area.
The simplicity of the problem led to no points being plotted under the belt-like area.

 \begin{figure*}[t]
	\centering
	\includegraphics[width=\linewidth]{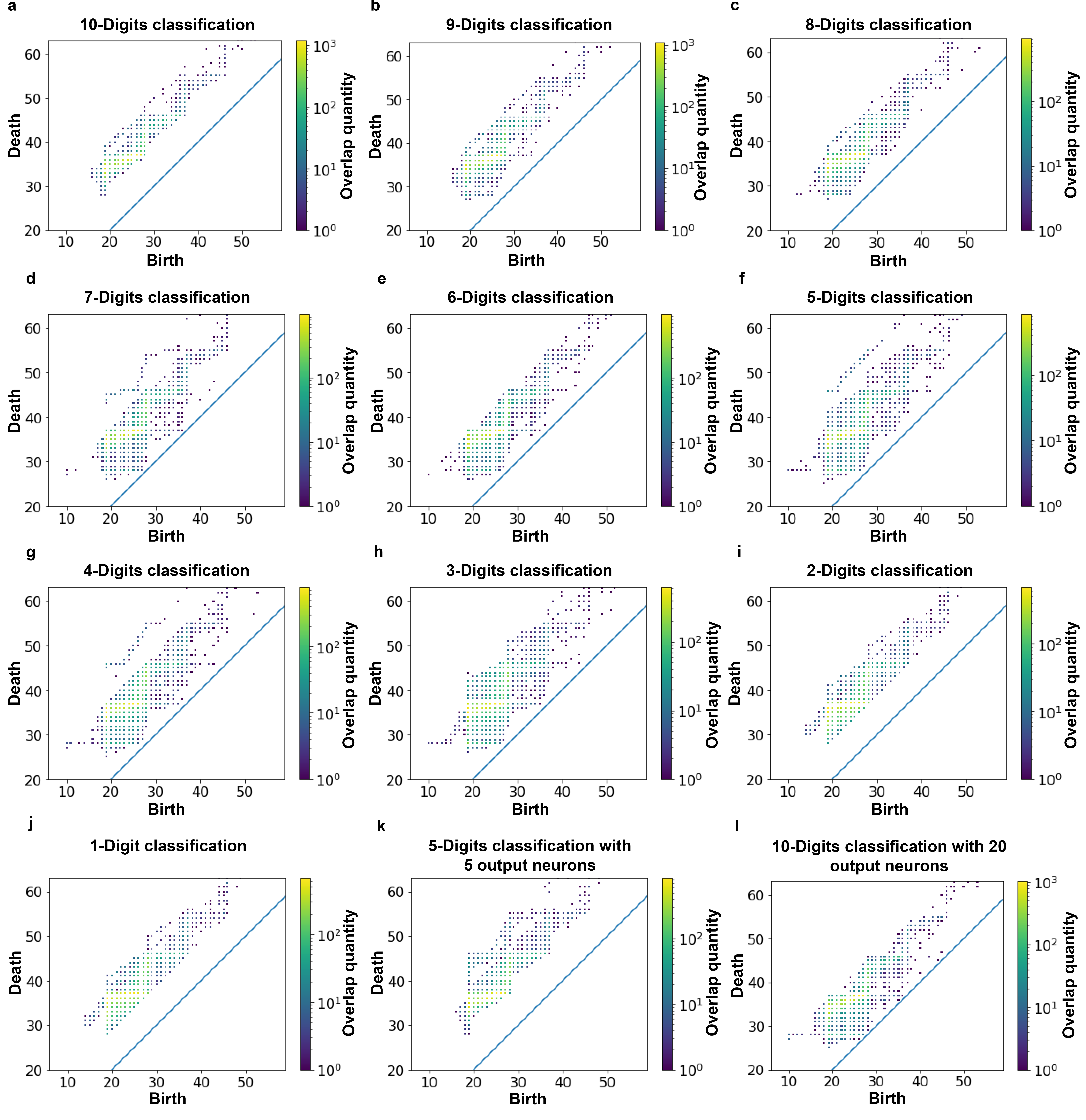}
	\caption{(a--j) PH
		 diagrams of the FNC models trained to classify handwritten digits based on a varying number of input digits from 10 to 1; (k) persistent diagram of the FCN model trained to classify five digits using five output neurons; (l) persistent diagram of the FCN model trained to classify 10 digits using 20 output neurons}
	\label{fig:MNIST300100}
\end{figure*}

\begin{table*}
	\caption{Number of points in Figs. \ref{fig:MNIST300100}(a--e), \ref{fig:MNIST300100}(i), and \ref{fig:MNIST300100}(j)}
	\label{NumberofPoint}
	\centering
	\begin{tabular}{lccccccc}
		\toprule
		& (a) & (b) & (c)&(d)&(e) &(i)& (j) \\
		\midrule
		Total number & 16,420  & 16,399 & 16,150 & 16,222 & 16,133 &15,857 &15,531 \\
		(c1) & N/A & 1,317 & 2,034 & 1,700 &2,972 & 8,226&13,123   \\
		(c2) & 0 & 45 & 26 & 254 & 273 &0& 0     \\
		%\begin{tabular}{l}
		(c1) and (c2)
		%\end{tabular}
		& N/A       & 45 & 26 & 254 &40& 0&0  \\
		\bottomrule
	\end{tabular}
\end{table*}

\subsection{CIFAR-10 data set}\label{CIFAR10Result}
Figs.\ref{fig:CIFAR300100}(a--j) illustrate PH diagrams of the DNN models combining a CNN and an FCN (300, 100, 10), where the number of classes used to train the models was varied. 
In particular, we extracted photographs of the target classes from the CIFAR-10 data set and trained the DNN models using the photographs of 10 classes (Fig. \ref{fig:CIFAR300100}(a)), nine classes (Fig.\ref{fig:CIFAR300100}(b)), and so on.

As described in Section \ref{evaluationSetup}, the contents of the CIFAR-10 data set differs from that of the MNIST data set in terms of the image size, tone, and represented object.
Unlike FCN-based models traind on the MNIST data set, CNNs were employed in addition to FCNs to classify the CIFAR-10 data set.
 
Despite these differences, Figs. \ref{fig:CIFAR300100} demonstrate similar patterns to those in 
Figs. \ref{fig:MNIST300100}.
In particular, the points under the belt-like area appear only in Figs. \ref{fig:CIFAR300100}(d--h); 
Fig. \ref{fig:CIFAR300100}(k), where the photographs of five classes are classified using five output neurons, has no points under the belt-like area, whereas Fig. \ref{fig:CIFAR300100}(l), where the photographs of 10 classes are classified using 20 output neurons, has points under the belt-like area.

A further experiment was conducted using the DNN models combining a CNN and an FCN (512, 512, 10). 
The results of this experiment are illustrated in Figs. \ref{fig:CIFAR512}.
A similar patterns regarding the appearance and disappearance of points under the belt-like area can be observed from Fig. \ref{fig:CIFAR512}; that is, only Figs. \ref{fig:CIFAR512}(d--h) and \ref{fig:CIFAR512}(l) have the points under the belt-like area.
This result suggests that the observation is robust to not only the network type and content of data sets but also number of neurons in FCNs.

Two additional observations can be made from Fig. \ref{fig:CIFAR300100} and \ref{fig:CIFAR512}: (i) the numbers of points in Figs. \ref{fig:CIFAR512} are larger than those in Figs. \ref{fig:CIFAR300100}; (ii) the sizes of the areas that points are plotted in Figs. \ref{fig:CIFAR512} are larger than those in Figs. \ref{fig:CIFAR300100}.
Tables \ref{pointCompare} and \ref{convexHull} list the numbers of points and sizes of the convex hull of the points plotted in Fig. \ref{fig:CIFAR300100}(a)--(j) and \ref{fig:CIFAR512}(a)--(j), respectively.
The numbers of points in Fig. \ref{fig:CIFAR512} are 8.81 to 9.31 times larger than those in Fig. \ref{fig:CIFAR300100}.
The sizes of the convex hulls in Fig. \ref{fig:CIFAR512} are 1.05 to 2.57 times larger than those in Fig. \ref{fig:CIFAR300100}.

\begin{figure*}
	\centering
	\includegraphics[width=\linewidth]{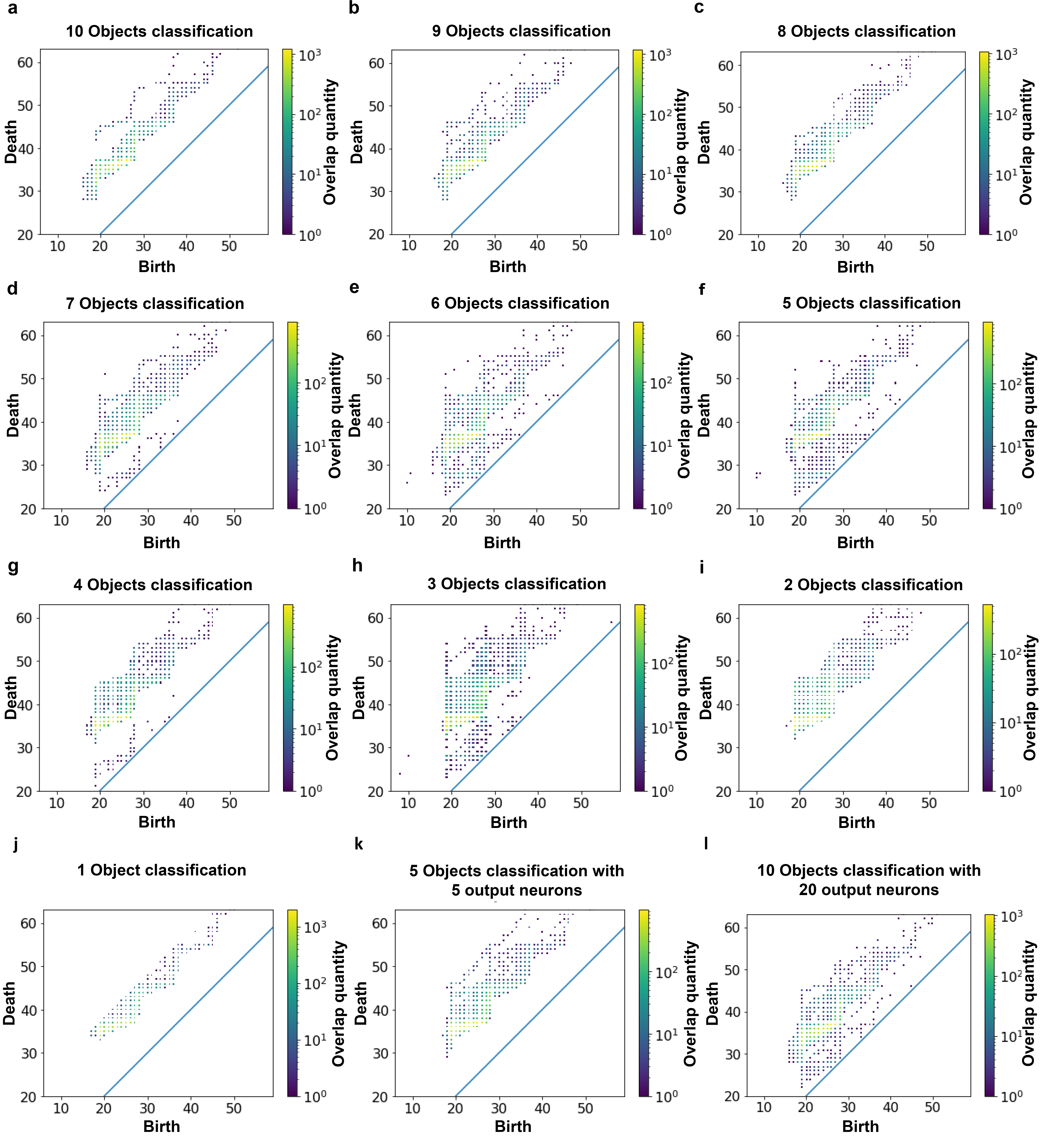}
	\caption{(a--j) PH diagrams of the DNNs using the FCN (300, 100, 10) trained to classify photographs based on a varying number of input classes from 10 to 1; (k) PH diagram of the DNN using the FCN (300, 100, 10) trained to classify five classes using five output neurons; (l) PH diagram of the DNN using the FCN (300, 100, 10) trained to classify 10 classes using 20 output neurons}
	\label{fig:CIFAR300100}
\end{figure*}

The number of points reflects the difference of expressiveness of the FCN (512, 512, 10) and FCN (300, 100, 10).
The FCN (512, 512, 10) has more parameters compared to the FCN (300, 100, 10), which results in 
the ability of the FCN (512, 512, 10) to learn knowledge is higher than that of the the FCN (300, 100, 10) and produces many homologies.
As a rough approximation, the FCN (512,512,10) has $512\times 512 + 512 \times 10$ of weight parameters, whereas the FCN (300, 100, 10) has $300\times 100 + 100 \times 10$ of them. 
The ratio 8.62 ($= (512\times 512 + 512 \times 10) / (300\times 100 + 100 \times 10)$) provides the explanation for the increase in the values listed in Table \ref{pointCompare}. 

\begin{figure*}
	\centering
	\includegraphics[width=\linewidth]{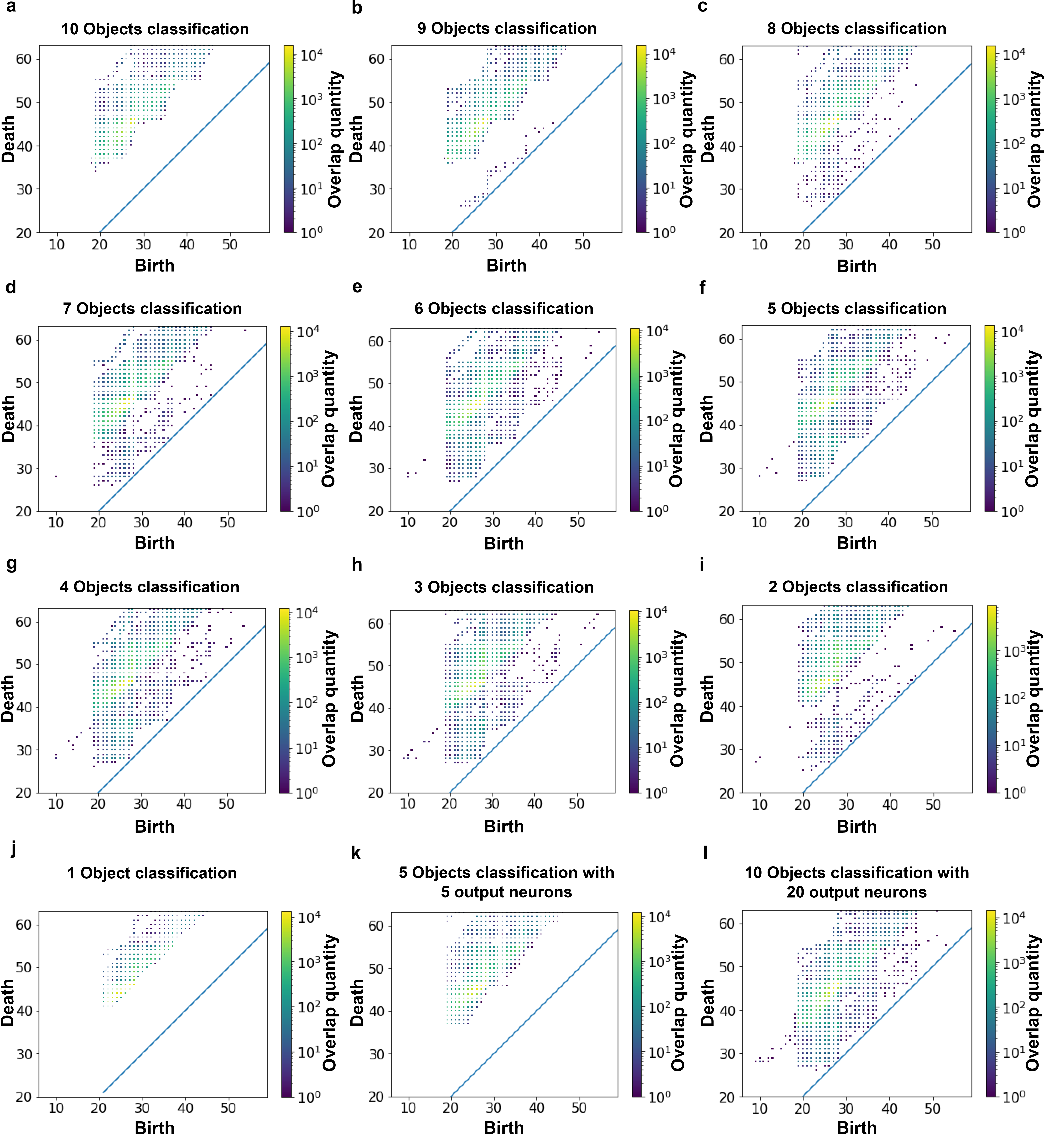}
	\caption{(a--j) PH diagrams of the DNN using the FCN (512, 512, 10) trained to classify photographs based on a varying number of input classes from 10 to 1; (k) PH diagram of DNN using the FCN (512, 512, 10) trained to classify five classes using five output neurons; (l) PH diagram of DNN using the FCN (512, 512, 10) trained to classify 10 classes using 20 output neurons}
	\label{fig:CIFAR512}
\end{figure*}

The increase in the size of convex hull is smaller than that of the number of points, which indicates that the FCNs (512, 512, 10) have duplicated homologies approximately 4 to 8 times more often compared to the FCNs (300, 100, 10).
It implies that the FCNs (512, 512, 10) have duplicated homologies with different neurons, which can be achieved  with expressive training to the data set.
The interpretation of the PH diagrams requires further investigation, which we left as a task for future work because the purpose of this study was only to examine the prominence of the topological measurement of DNNs. 

 \begin{table*}
	\caption{Number of points in Figs.\ref{fig:CIFAR300100}(a--j) and \ref{fig:CIFAR512}(a--j)}
	\label{pointCompare}
	\centering
	\renewcommand{~}{\phantom{0}}
	\begin{tabular}{cccc}
		\toprule
	    & \begin{tabular}{c}
			(A) Fig. \ref{fig:CIFAR300100}: FCN\\(300, 100, 10)
		\end{tabular} 
		&  \begin{tabular}{c}
			(B) Fig. \ref{fig:CIFAR512}: FCN\\(512, 512, 10)
		\end{tabular} 
		& \begin{tabular}{c}
			(B) / (A)
		  \end{tabular}
		\\
		\midrule
		(a) & 16,214 & 142,768 & 8.81 \\
		(b) & 16,278 & 139,783 & 8.59 \\
		(c) & 15,702 & 142,016 & 9.04 \\
		(d) & 15,421 & 141,027 & 9.15 \\
		(e) & 15,274 & 138,732 & 9.08 \\
		(f) & 15,759 & 136,508 & 8.66 \\
		(g) & 14,878 & 133,503 & 8.97 \\
		(h) & 14,348 & 124,919 & 8.71 \\
		(i) & 11,496 & 106,983 & 9.31 \\
		(j) & 15,073 & 132,775 & 8.81 \\
		\bottomrule
	\end{tabular}
\end{table*}

 \begin{table*}
	\caption{Size of the convex hull in Figs. \ref{fig:CIFAR300100}(a--j) and \ref{fig:CIFAR512}(a--j)}
	\label{convexHull}
	\centering
	\renewcommand{~}{\phantom{0}}
	\begin{tabular}{cccc}
		\toprule
		& \begin{tabular}{c}
			(A) Fig. \ref{fig:CIFAR300100}: FCN\\(300, 100, 10)
		\end{tabular} 
		&  \begin{tabular}{c}
			(B) Fig. \ref{fig:CIFAR512}: FCN\\(512, 512, 10)
		\end{tabular} 
		& \begin{tabular}{c}
			(B) / (A)
		\end{tabular}
		\\
		\midrule
		(a) & 445.5 & ~492.5 & 1.11 \\
		(b) & 477.0 & ~737.5 & 1.55 \\
		(c) & 406.0 & ~881.0 & 2.17 \\
		(d) & 710.5 & 1029.5 & 1.45 \\
		(e) & 823.0 & ~959.5 & 1.17 \\
		(f) & 836.0 & ~904.8 & 1.08 \\
		(g) & 634.5 & ~964.5 & 1.52 \\
		(h) & 992.0 & 1041.5 & 1.05 \\
		(i) & 413.5 & 1061.0 & 2.57 \\
		(j) & 232.5 & ~254.0 & 1.09 \\
		\bottomrule
	\end{tabular}
\end{table*}

\subsection{Robustness on weight initialization}
We conducted additional experiments by varying the initial values of network weights to investigate the robustness of the PH diagrams' transitions described in Subsections \ref{MNISTResult} and \ref{CIFAR10Result}.
Keras framework starts the training with random initial values of network weights \cite{Chollet:2017:DLP:3203489}.
We repeated each experiment 10 times by varying the number of input classes from 10 to 1 with the three network types, MNIST $(300, 100, 100)$, CIFAR-10 $(300, 100, 100)$, and CIFAR-10 $(512, 512, 10)$, resulting in a total of 300 additional experiments.

Fig. \ref{fig:sizeofconvex} shows the minimum, average, and maximum size of convex hulls of the points in the PH diagrams.
The differences between the maximum and minimum values indicate the degree of vibration of the experiment results.
All the three graphs are approximately convex upward, indicating that the PH diagrams transit the shape in a similar manner to those described in Subsections \ref{MNISTResult} and \ref{CIFAR10Result}, and the transitions are robust on the initial values of network weights.

In Subsections \ref{MNISTResult} and \ref{CIFAR10Result}, we observed the transition of the PH diagrams that the number of points near the dialog line ($death \le birth +5$) changes by varying the number of input classes.
No point near the dialog line appeared when the number of input classes was set to 10 and 1. 
Additionally, the number of points near the dialog line increased and decreased with the decrease in the number of input classes from 10 to 8 and 3 to 1, respectively.

Table \ref{NumberofPointUnder} lists the minimum, average, and maximum numbers of points near the dialog line regarding the additional experiments.
We observed that no point appeared near the dialog line when the number of input classes was set to 10 and 1 in all the additional experiments.
Additionally, the increase and decrease followed the same trend in the additional experiments, shown in Table \ref{NumberofPointUnder}, meaning that the observations obtained in Subsections \ref{MNISTResult} and \ref{CIFAR10Result} are robust on the initial values of network weights.

\begin{figure*}
	\centering
	\includegraphics[width=\linewidth]{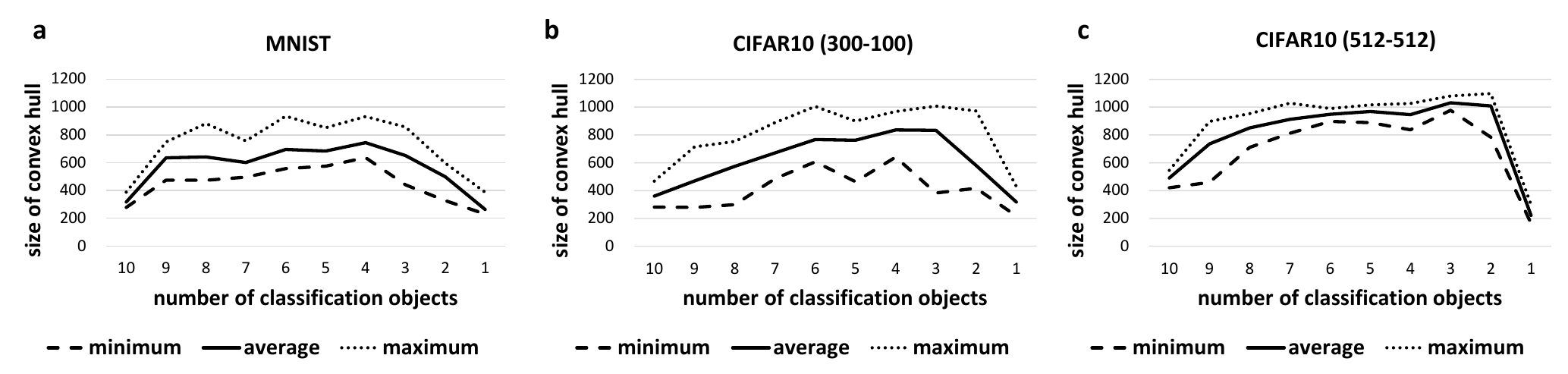}
	\caption{(a)--(c) Size of the convex hull of points in the PH diagrams with MNIST using the FCN $(300,100,10)$, CIFAR-10 using the FCN $(300, 100, 10)$,and CIFAR-10 using FCN $(512, 512, 10)$ by varying the number of input classes, respectively}
	\label{fig:sizeofconvex}
\end{figure*}

\begin{table*}
	\caption{Number of points near the dialog line ($death \le birth + 5$)}
	\label{NumberofPointUnder}
	\centering
	\begin{tabular}{crrrrrrrrr}
		\toprule
			\multirow{2}{*}
			{\begin{tabular}{c}Number of \\
						 input classes
			\end{tabular}
			}
			& \multicolumn{3}{c}{MNIST} 
			& \multicolumn{3}{c}{CIFAR-10 (300-100)} 
			& \multicolumn{3}{c}{CIFAR-10 (512-512)} \\
			& min. & avg. & max.& min. & avg. & max. & min. & avg. &max. \\
		\midrule
		10    &   0 &   0 &   0 &   0 &   0 &   0 &   0 &   0 &   0 \\
		9     &  57 &  96 & 132 &   0 &  11 &  59 &   0 & 115 & 234 \\
		8     & 110 & 150 & 199 &   0 &  33 & 102 &  79 & 273 & 497 \\
		7     & 141 & 209 & 297 &   0 &  78 & 143 & 278 & 375 & 451 \\
		6     & 141 & 269 & 348 &   0 & 136 & 284 & 209 & 376 & 571 \\
		5     & 137 & 332 & 528 &   0 & 142 & 334 &  52 & 380 & 620 \\
		4     & 111 & 308 & 524 &  48 & 196 & 321 &  13 & 423 & 823 \\
		3     &  46 & 131 & 207 &   0 & 158 & 365 & 591 & 764 & 909 \\
		2     &   0 &   0 &   1 &   0 &  36 & 252 & 145 & 581 & 936 \\
		1     &   0 &   0 &   0 &   0 &   0 &   0 &   0 &   0 &   0 \\
		\bottomrule
	\end{tabular}
\end{table*}

\section{Discussion}\label{discusstion}
In this section, the assumptions used in this study are explained and the application of the topological measurement of DNNs is discussed.

\subsection{Assumptions}
The assumptions of this study include the follows: (1) the knowledge in DNNs can be investigated from their network weights among neurons and (2) PH reveals the knowledge complexity of DNNs.
The first assumption is acceptable because the weights are the outcome of the training process.
The second assumption is based on the observations from previous works described in Sec.\ref{sec:intuition} \cite{lecun2015deep,Chollet:2017:DLP:3203489}.
PH reveals the births and deaths of feature combinations, which are difficult to be captured without using PH.
The effectiveness of the second assumption can be evaluated from the usability, which changes depending on the application.

\subsection{Applications}
One of the most important applications of the proposed method is recognizing the quality of DNN training.
In particular, the performance of DNNs can deteriorate for many reasons, including a shortage of data, overfitting, and improper hyper-parameter setting \cite{NIPS2011_4443,srivastava2014dropout}.
Our results imply that the shortage of data can be indicated by the PH, that is the excess of the output neurons produces homologies near the dialog line.
Furthermore, the proposed method is beneficial for selecting appropriate DNN architectures, which is one of the major challenges when utilizing DNNs \cite{saxena2016convolutional,zoph2016neural}. 

\section{Related work}\label{related}
Bianchini et al.\ investigated the upper and lower bounds of network complexity from the viewpoint of PH \cite{bianchini2014complexity}. 
Based on their results, Guss et al.\ empirically analyzed the relationship between the upper bound of network complexity and data complexity measured by PH to determine appropriate network architecture for a given data set \cite{Guss2018OnCT}.
However, these two types of complexities are not homogeneous, and their comparability is uncertain. 
Under these considerations, we addressed the inner representations of DNNs with small perturbations.
Our evaluation results revealed that small perturbations such as the number of output neurons and a variety of input data have significant impact on PH. 
Thus, the sensitivity of PH requires a careful investigation for securing comparability.

Bastian et al.\ investigated the complexity of the inner representation of DNNs using zero-dimensional PH  \cite{rieck2018neural}.
Zero-dimensional PH counts the number of connected components in DNNs.
Fig. \ref{fig:figure2}(f) and (g) have $\beta_{0} = 3$ and $\beta_{0} = 2$ corresponding to the connected components, respectively.
In contrast, the Betti number $\beta_{1}$ reveals the combinations among neurons illustrated in Fig. \ref{fig:figure2}(g), where the neurons one and three collaborate to increase the Betti number $\beta_{1}$. 
Thus, we believe that one-dimensional PH can reveal the combination of neurons and access essential aspects of DNNs that are difficult to be accessed using other methods.

\section{Conclusion}\label{conclusion}
This paper introduced a novel approach to investigate the inner representation of DNNs using PH.
Evaluations were conducted using FCNs and networks combining a CNN and an FCN trained on the MNIST and CIFAR-10 data sets.
The evaluation results demonstrated that the one-dimensional PH of DNNs can reflect both the excess of neurons and problem difficulty, which implies that PH can become one of the prominent methods for investigating the inner representation of DNNs.

The methods for constructing simplicial complexes and defining the filtration are developed on the basis of our attempts. 
The development of these methods will, however, include many research areas, especially due to large variety of network types, including CNNs and recursive neural networks (RNNs).
Furthermore, with regard to computation, the development would require considerable efforts in applying the topological measurement to enlarged neural networks, which can have more than 1,000 layers \cite{he2016deep}.
At the same time, we believe that the topological measurement of DNNs is worth further investigation.

% BibTeX users please use one of
%\bibliographystyle{spbasic}      % basic style, author-year citations
%\bibliographystyle{spmpsci}      % mathematics and physical sciences
%\bibliographystyle{spphys}       % APS-like style for physics
%\bibliography{}   % name your BibTeX data base

% Non-BibTeX users please use
\bibliographystyle{plain}
\bibliography{20210606AnnalsMath-arXiv}
	
%
% and use \bibitem to create references. Consult the Instructions
% for authors for reference list style.
%
%\begin{thebibliography}{}
%\bibitem{RefA}
% Format for Journal Reference
%Author, Article title, Journal, Volume, page numbers (year)
% Format for books
%\bibitem{RefB}
%Author, Book title, page numbers. Publisher, place (year)
% etc
%\end{thebibliography}

\end{document}